\documentclass[twoside]{article}

%
\usepackage{amsmath}
\usepackage{amssymb}
\usepackage{amsthm}
\usepackage{algorithm}
\usepackage{algorithmic}
\usepackage{array}
\usepackage{ifpdf}
\usepackage{cite}
\usepackage{setspace}
\usepackage{graphicx}
\usepackage{subfigure}
\usepackage{color}

\newtheorem{proposition}{Proposition}

\usepackage{hyperref}


\usepackage[accepted]{aistats2019}

\begin{document}

\twocolumn[

\aistatstitle{Adaptive Clinical Trials: Exploiting Sequential Patient Recruitment and Allocation}
\aistatsauthor{ Onur Atan \And William R. Zame \And  Mihaela van der Schaar }
\aistatsaddress{ UCLA \And  UCLA \And UCLA} 
]

\begin{abstract} \vspace{-.2in}
Randomized Controlled Trials (RCTs) are the gold standard for comparing the effectiveness of a new treatment to the current one (the control).  Most RCTs allocate the patients to the treatment group and the control group by uniform randomization.  We show that this procedure can be highly sub-optimal (in terms of learning) if -- as is often the case -- patients can be recruited in cohorts (rather than all at once), the effects on each cohort can be observed before recruiting the next cohort, and the effects are heterogeneous across identifiable subgroups of patients.  We formulate the patient allocation problem as a finite stage Markov Decision Process in which the objective is to minimize a given weighted combination of type-I and type-II errors.  Because finding the exact solution to this Markov Decision Process is computationally intractable, we propose an algorithm --  \textit{Knowledge Gradient for Randomized Controlled Trials} (RCT-KG) -- that yields an approximate solution. We illustrate our algorithm on a synthetic dataset with Bernoulli outcomes and compare it with uniform randomization.  For a given size of trial our method achieves significant reduction in error, and to achieve a prescribed level of confidence (in identifying whether the treatment is superior to the control), our method requires many fewer patients.  Our approach uses what has been learned from the effects on previous cohorts to recruit patients to subgroups and allocate patients (to treatment/control) within subgroups in a way that promotes more efficient learning.  \end{abstract}

\section{Introduction} \vspace{-.1in}

Randomized Controlled Trials (RCTs) are the gold standard for evaluating new treatments.  Phase I trials are used to evaluate safety and dosage, Phase II trials are used to provide some evidence of efficacy, and Phase III trials (which are the subject of this paper) are used to evaluate the effectiveness of the new treatment in comparison to the current one.  A typical question that a Phase III RCT is intended to answer is ``\textit{In population W is drug A at daily dose X more efficacious in improving Z by Q amount over a period of time T than drug B at daily dose Y?}'' \cite{friedman1998fundamentals}. 

RCTs are useful because they create a treatment group and a control group that are as similar as possible except for the treatment used.  Most RCTs recruit patients from a prescribed target population and uniformly randomly assign the patients to treatment groups using repeated uniform randomization (perhaps adjusted to deal with chance imbalances). This approach is optimal if all patients are recruited at once or if the outcomes for previous patients cannot be observed when recruiting new patients or if the patient population is (or is thought to be) homogeneous. However, in many circumstances, patients are (or can be) recruited in cohorts, the outcomes for patients in previous cohorts can be observed when recruiting a new cohort, and the population contains identifiable subgroups for which differences in effects might be expected.  (For example, different effects of treatment might be expected for patients with different genetic mutations \cite{moss2015efficacy}; see \cite{hebert1999multicenter} and Table \ref{table:rct_example} for additional examples.)  In such situations, the information learned from previous cohorts can be used in recruiting and allocating patients in the new cohort: the optimal policy should not necessarily recruit the same number of patients to each identifiable subgroup or allocate equal numbers of patients within a subgroup to the treatment  and the control.  We illustrate this point in the Experiments.

Our goal in this paper is to develop a procedure that prescribes two things: (i) the number of patients in each cohort to recruit from each subgroup, (ii) the allocation of these patients to treatment or control in order to minimize the error (type-I or type-II or a given convex combination) in identifying the patient subgroups for which the treatment is more/less effective than the control. Our work differs from recent work on Bayesian clinical trials~\cite{berry2006bayesian} in which the the information obtained from the previous cohorts are used only for treatment allocation; in our work, the information from previous cohorts is used both for patient recruitment and for allocation to treatment or control. As an example, consider the RCT setting in \cite{barker2009spy} for neoadjuvant chemotherapy in which  two subgroups are identified based on the hormone receptor status, human epidermal growth factor receptor 2 (HER2) status, and MammaPrint11,12 status. If our procedure were to be used in this actual trial to recruit 100 patients in each cohort, the initial cohort would consist of 50 patients from each subgroup and would allocate them uniformly to treatment/control.  However, in the second and succeeding cohorts, we would use the observed outcomes from earlier cohorts to recruit more patients from the subgroup in which uncertainty about treatment efficacy was larger and, within each subgroup, we would allocate more patients to whichever of treatment/control had displayed larger variance. 

Our first contribution is to formalize the learning problem as a finite stage Markov Decision Problem (MDP) in which the designer recruits $N$ patients over $K$ steps and observes the outcomes of step $k-1$ before taking step $k$.  However, because the action and state spaces of this MDP are very large, solving this MDP by dynamic programming is computationally intractable.  We therefore propose a computationally tractable greedy algorithm \textit{Knowledge Gradient for Randomized Controlled Trials} (RCT-KG)  that yields an approximate solution. We illustrate the effectiveness of our RCT-KG algorithm in a set of experiments using synthetic data in which outcomes are drawn from a Bernoulli distribution with unknown probabilities.  In particular, we show that, keeping the sizes of the trial and of the cohorts fixed, RCT-KG yields significantly smaller expected error; conversely, in order to achieve a given level of confidence in identifying whether the treatment is better than the control, RCT-KG requires many fewer patients/cohorts. 

Our approach makes a number of assumptions.  The first is that patients can be recruited in cohorts, and not all at once.  The second is that the outcomes for patients in each cohort are realized  and can be observed before recruiting and allocating patients for the succeeding cohort.  The third is that subgroups are identified in advance.  The fourth is that, for each cohort (after the first), the number of patients recruited in each subgroup and the allocation of patients (to treatment or to control) within each subgroup can be chosen to depend on the observations made from previous cohorts. These assumptions are strong and certainly are not satisfied for all RCTs, but they are satisfied for {\em some} RCTs, and for those our approach offers very significant improvements over previous approaches to the speed and accuracy of learning.

It is important to understand that these assumptions do {\em not} imply that our approach is entirely unblinded.  In each cohort, the assignment of patients within each subgroup to treatment/control is completely blind to both patients and clinicians; it is only {\em after} the assignment is made and the outcome is realized that the clinicians learn which patients received the treatment and which received the control.  Moreover, In each cohort, the assignment of patients within each subgroup to treatment/control is completely random, although the particular randomization used for a particular subgroup in a particular cohort depends on the outcomes that have been observed for previous cohorts.
 
\begin{table*}
\scriptsize
\centering
\begin{tabular}{ | c | c | p{3cm} | p{3cm} | p{6cm} | }
    \hline
    Study & Size & Treatment & Primary Outcome & Result   \\ \hline
    \cite{hacke1995intravenous} & $620$ & rt-PA (alteplase) & Barthel Index (BI) at $90$ days & Treatment is effective in improving the outcome in a defined subgroup of stroke patients. \\ \hline
    \cite{hebert1999multicenter} & $838$ & Red cell transfusion & $30$ days mortality & effective among the patients with Apache 2 score less than equal to $20$ and age less than $55$. \\ \hline
    \cite{hacke2008thrombolysis} & $821$ & Intravenous thrombolysis with alteplase & disability at $90$ days & As compared with placebo, intravenous alteplase improved clinical outcomes significantly.  \\ \hline
    \cite{moss2015efficacy} & $69$ & Ivacaftor &  ppFEV1 in week 24 & Ivacaftor significantly improves lung function in adult patients with R117H-CFTR. \\ \hline
    \end{tabular}
\caption{RCT Examples in the Literature}
\label{table:rct_example}
\end{table*}

\section{Related Work}\vspace{-.1in}

The most commonly used procedure to allocate patients into treatment and control groups is ``repeated fair coin-tossing'' (uniform randomization).  One potential drawback to this approach is the possibility of unbalanced group sizes when the set of patients (or the set of patients in an identified subgroup) is small \cite{friedman1998fundamentals}. Hence, investigators often follow a {restricted} randomization procedure for small RCTs, such as {\em blocked randomization} \cite{lachin1988randomization} or {\em adaptive bias-coin randomization} \cite{schulz2002allocation}. These procedures have the effect of  assigning more patients to  treatment/control groups to prevent an imbalance between the groups. An alternative, but less frequently used procedure is \textit{covariate-adaptive randomization} in which patients are assigned to treatment groups to minimize \textit{covariate imbalance} \cite{moher2012consort}. These approaches are efficient when patients are recruited at one time or the outcomes of previous cohorts are not observed . However, as we have noted and can be seen in Table \ref{table:rct_example}, it is often the case that patients are recruited sequentially in cohorts and the outcomes of previous cohorts can be observed before recruiting the next cohort.  There is a substantial literature concerning such settings \cite{lewis1990sequential,whitehead1997design}, but it focuses on the decision of whether to terminate the trial, rather than how to allocate the next cohort of patients to the treatment or control groups, which is the focus of our paper.  

 \textit{Response adaptive randomization}  uses information about previous cohorts to allocate patients in succeeding cohorts: the probability of being assigned to a treatment group is increased if responses of prior patients in that particular group has favorable~\cite{hu2006theory,berry2006bayesian}. However, the goal of the most of these approaches is to improve the benefit to the patients in the trial rather than to learn more about the comparison between the treatment and the control. Multi-Armed Bandits (MABs) constitute a general mathematical decision framework for resource (patient) allocation with the objective of  maximizing the cumulative outcomes (patient benefits) \cite{auer2002finite,gittins2011multi,agrawal2012analysis}. However, \cite{villar2015multi} shows that although MABs achieve greater patient benefit, they suffer from poor learning performance because they allocate most of the patients to favorable actions (treatments). Hence, \cite{villar2015multi} proposes a variation on MAB algorithms that allocates patients to control action at times to maintain patient benefit while mitigating the poor learning performance. Some of the existing work on Bayesian RCTs~\cite{berry2006bayesian} does use the information from previous cohorts to allocate patients in order to improve the learning performance. The work closest to ours may be \ref{bhatt2016} which specifically addresses the special case of a trial with two identified subgroups.  However the only adaptation considered is to entirely cease recruiting patients into one of the subgroups, not to change the number of patients recruited into each subgroup.  Our paper prescribes a principled algorithm to improve the learning performance that adjusts both the number of patients recruited into each subgroup and the proportion of patients allocated to treatment/control within each subgroup.

Another line of literature relevant to ours is family of policies known as Optimal Computing Budget Allocation (OCBA) ~\cite{chen1995effective,chen2003optimal}. These policies are derived as an optimization problem to maximize the probability of later identifying the best alternatives by choosing the measurements. The approach in these policies is by approximating the objective function with lower and upper bounds. In this paper, we provide a Bayesian approach to RCT design and model the problem as a finite stage MDP.  The optimal patient allocation policy can be obtained by solving the Dynamic Programming (DP); for the case of discounting, the Gittins Index ~\cite{gittins2011multi,whittle1980multi} provides an (optimal) closed form solution to the MDP.  However, both of these approaches are computationally intractable for problems as large as typical RCTs.

Knowledge Gradient (KG) policies provide an approximate, greedy approach to the MDP. However, in our setting, the action space for the MDP, which contains all possible allocation choices, is very large and hence  KG policies are again not  computationally tractable. (Morever, KG policies typically assume multivariate normal priors for the measurements~\cite{frazier2008knowledge,frazier2009knowledge}.). \cite{chen2013optimistic, chen2015statistical} proposes a variation of  KG policies that they deemed Opt-KG and that selects the action that may generate maximum possible reduction in the errors.  In our setting, their approach would require that patients be recruited one at a time and that the outcome of the action (treatment or control) chosen for each patient from each  subgroup be observable before the next patient is recruited.  Those requirements are not appropriate for our setting, in which we  observe only a (noisy) signal about the true outcome of the selected treatment and we recruit patients in cohorts. not one at a time.  (It would typically be completely impractical to allocate patients one at a time: for  any realistic time to observation and total number of patients, doing so would result in a clinical trial that would last for many years.)  Moreover, our approach allow for a much broader class of outcome distributions (exponential families, which includes Bernoulli distributions and many others) and we allocate all patients in a cohort at each stage.  Our  RCT-KG algorithm might be viewed as generalizing  Opt-KG in all of the aspects mentioned.

\section{A Statistical Model for a Randomized Clinical Trial} \vspace{-.1in}

Our statistical model for a RCT has four components: the patient population, the given patient subgroups, the treatments and the  treatment outcomes. Write $\mathcal{W}$ for the patient population and  $X$ for a prescribed partition of $\mathcal{W}$ into patient subgroups. Write $Y = \{0,1\}$ for the action space; $0$ represents the  control action and 1 represents the treatment action.  Let $Z$ be the outcome space; without much loss we assume $Z \subset \mathbb{R}$ with minimum $z_{\min}$ and maximum $z_{\max}$.  ($Z$ might be continuous or discrete.)
     
We wish to allocate a total of $N$ patients in $K$ steps/cohorts over a total time $T$ in order to identify the more efficacious treatment for each subgroup.  As discussed in the Introduction, we make the following assumptions:
\begin{enumerate} \vspace{-.1in}
\item Patients can be be recruited in cohorts. \vspace{-.1in}
\item The outcomes for patients in each cohort are realized and can be observed before recruiting and allocating patients for the succeeding cohort. \vspace{-.1in}
\item Subgroups can be identified in advance. \vspace{-.1in}
\item For each cohort after the first, the number of patients recruited in each subgroup and the allocation within each subgroup can be chosen as a function of the observations from previous cohorts.    \vspace{-.1in}
\end{enumerate}
In addition, we make one technical assumption:
\begin{enumerate} \vspace{-.1in}
\setcounter{enumi}{4}
   \item The outcome distribution belongs to the exponential family. \vspace{-.1in}
   \end{enumerate}

In the next subsection, we give a brief description of the exponential family of distributions and Jeffrey's prior on their parameters.

\subsection{Exponential Families and Jeffrey's Prior} \vspace{-.1in}

Let $\Theta$ be a  parameter space. Fix functions $G: Z \rightarrow \mathbb{R}^d$ and $h: Z \rightarrow \mathbb{R}$. The $d$-dimensional parameter exponential family with sufficient statistic 
$G$ and parametrization $\theta$, relative to $h$,  is the family of distributions defined by the following densities:
$$
p(z | \theta) =  \Phi(\theta ) h(z) \exp\left(\theta {\cdot} {G}(z)\right)
$$
where $\Phi(\theta)$ is uniquely determined by the requirement that $p(\cdot | {\theta})$ is a probability density; hence 
\begin{eqnarray*}
\int_{-\infty}^{\infty} p(z | {\theta}) d\,z &=& 1\\
\Phi({\theta}) &=& \left( \int_{-\infty}^{\infty} h(z) \exp({\theta} {\cdot} {G}(z)) d\,(z) \right)^{-1}
\end{eqnarray*}
 An alternative expression is: 
$$
p(z | {\theta}) =  h(z) \exp\left({\theta} {\cdot} {G}(z) - F({\theta})\right)
$$
where $F({\theta}) = - \log \Phi({\theta})$. Write $\mu$ for  the expectation: 
$\mu({\theta}) = \mathbb{E}_{Z | {\theta}}\left[ Z\right]$. Different choices of the various ingredients lead to a wide variety of different probability densities; for example choosing $G(z) = z, h(z) = 1, \theta = \ln \frac{q}{1-q}$ generates a Bernoulli distribution. Other choices lead to Poisson, exponential and Gaussian distributions. 

In this paper, we want a Bayesian ``non-informative'' prior that is invariant under re-parametrization of the parameter space. To be specific, we  use  Jeffrey's prior, which is proportional to the square root of the Fisher information $I({\theta})$. In the case of the exponential family, the Fisher information is the second derivative of the normalization function, i.e., $I({\theta}) = \sqrt{ \big| F''({\theta}) \big| }$. Under  Jeffrey's prior, the posterior on the parameter 
$\theta$ after $n$ observations is given by
$$
p({\theta} | z_1, \ldots, z_n) \propto \sqrt{ \big| F''({\theta}) \big| } \exp\left( \sum_{i=1}^n {\theta} {\cdot} {G}(z_i) - n F({\theta})\right)
$$
Given $n$ outcomes $(z_1, z_2, \ldots, z_n)$, we can summarize the information needed for the posterior distribution by ${s} = \left[ s_0, s_1\right] = \left[ \sum_{i=1}^n {G}(z_i), n\right]$.  The posterior is then proportional to $ \sqrt{ \big| F''({\theta}) \big| }\exp\left({\theta} {\cdot} {s}_0 - s_1 F(\theta)\right)$. 

Given a subgroup $x \in X$ and a treatment $y \in Y$ Let ${\theta}_{x,y}$ be the true parameter for subgroup $x$ and treatment $y$. (Of course ${\theta}_{x,y}$ is not known.) The result of treatment $y$ on a patient in subgroup $x$ is referred to as the {\em treatment outcome} and is assumed to be drawn according to the exponential family distribution, i.e., $Z \sim p(\cdot | {\theta}_{x,y})$. Write $\mu({\theta}_{x,y})$ for the true expected outcome of the treatment $y$ on subgroup $x$ and define the {\em treatment effect} for the parameters $\theta_0, \theta_1$ as $E({\theta}_0, {\theta}_1) = \mu({\theta}_1) - \mu({\theta}_0)$;  the treatment effect on subgroup $x$ is  $E({\theta}_{0,x}, {\theta}_{1,x})$. 

\subsection{Treatment Effectiveness } \vspace{-.1in}

Given a threshold $\tau \geq 0$ (set be the designer), we define 
\[ \nu(x) =
  \begin{cases}
    1       & \quad \text{if } \frac{E({\theta}_{x,0}, {\theta}_{x,1})}{\mu({\theta}_{x,0})} \geq \tau \\
    0  	    & \quad \text{if  otherwise}
  \end{cases}
\] 
so $\nu(x) = 1$ if the treatment is sufficiently better than the control, in which case we say the treatment is {\em effective}. (For example, see  \cite{farrar2000defining}, in which the goal is to identify whether reduction in pain by the treatment with respect to control is more than $33\%$.) We define the {\em positive set} ${H}^{+} = \{ x \in {X} : \nu(x) = 1 \}$ to be  the set of subgroups for which the treatment is effective and the {\em negative set} ${{H}}^{-} = {X} \setminus {H}^{-}$ to be the complementary set of subgroups for which the treatment  is ineffective. 

Given the dataset, any algorithm can only produce a set of subgroups in which treatment is {\em estimated} to be effective; write ${H}^+_{\rm est}$ for this set of subgroups and ${{H}}^-_{\rm est}$ for the complementary set of subgroups.  A type-I error occurs if a subgroup $x \in {H}^{+}$ is  in 
${{H}}^-_{\rm est}$ (i.e. treatment is actually effective but is estimated to be ineffective); a type-II error occurs if a subgroup $x \in {{H}}^{-} $ is in ${H}^{+}$ (i.e. treatment is actually ineffective but is estimated to be effective).  For a given estimated set ${H}^+_{\rm est}$, the magnitudes of type-I and type-II errors are
\begin{align*}
\mathrm{e}^K_1 &= \sum_{x \in {X}} 1\left(x \in {H}^{+} \right) 1\left(x \in {H}^-_{\rm est} \right)& \notag \\ 
\mathrm{e}^K_2 &= \sum_{x \in {X}} 1\left(x \in {H}^{-} \right) 1\left(x \in {H}^+_{\rm est}  \right)&
\end{align*} 
Given $\lambda \in [0,1]$ the {\em total error} is: 
$$\mathrm{e}^K = \lambda \mathrm{e}^K_1 + (1- \lambda)\mathrm{e}^K_2$$ 
where $\lambda$ is a parameter that is selected by the designer based on the designer's view of the importance of type-I and type-II errors. (We use the superscript $K$ to indicate that we are computing errors after $K$ cohorts have been recruited.)

\section{Design of a RCT as a Markov Decision Problem} \vspace{-.1in}

In this subsection, we model the RCT design problem as a finite step non-discounted MDP. A finite step MDP consists of number of steps, a state space, an action space, transition dynamics and a reward function. We need to define all the components of the MDP.   

We are given a budget of $N$ patients to be recruited in $K$ steps. At time step $k$, the designer decides to recruit $M_k$ patients; of these $u_k(x,y)$ are from subgroup $x$ and are assigned to treatment $y$, so  
$
\sum_{x \in X} \sum_{y \in Y} u_k(x,y) = M_k.
$  
Having made a decision $U_k = \{ u_k(x,y) \}$ in step $k$, the designer observes the outcomes $W_k = \{ W_k(x,y) = \sum_{j=1}^{u_k(x,y)} G(Z_j): Z_j \sim \mathbb{P}(\cdot | \theta_{x,y}) \}$. 

Write $\bar{M}_{k-1} = \sum_{\ell=0}^{k-1} M_\ell$ for the number of patients recruited through step $k-1$. We define a filtration $\left( \mathcal{F}_k \right)_{k=0}^K$ by setting $\mathcal{F}^k$ to be the sigma-algebra generated by the  decisions and observations through step $k-1$: $\{ U^0, W^0, U^1, W^1 \ldots, U^{k-1}, W^{k-1} \}$. We write $\mathbb{E}_k\left[ \cdot\right] = \mathbb{E}\left[ \cdot | \mathcal{F}^k \right]$ and $\operatorname{Var}_k\left[ \cdot \right] = \operatorname{Var} \left[\cdot | \mathcal{F}^k \right]$. Recruitment and allocation decisions are restricted to be $\mathcal{F}^k$-measurable so that decisions to be made at each step depends only on information available from previous steps.

The state space for the MDP is the space of all possible distributions under consideration for $\{\theta_{x,y}\}$. Let $S^k$ denote the $2 X \times (d+1) $ \textit{state matrix}  that contains the hyper-parameters of posterior distribution of the outcomes for both treatment and control actions for all $(x,y) \in X \times Y$ in the $k$th step. Define 
$\mathcal{S}^k$ to be the all possible states at the $k$th step, that is, 
$
\mathcal{S}^k = \bigg\{ S^k = \left[s_{x,y}^k \right] :  s_{x,y}^k = \left[ s_{x,y,0}^k, s_{x,y,1} \right] \bigg\}
$
where $s_{x,y,0}^k$ is the $d$-dimensional cumulative sufficient statistic and $s_{x,y,1}$ is the number of samples from subgroup $x$ with treatment action $y$. 

The action space at step  $k$ is the set of all possible pairs of $(M_k, U_k)$ with $M_k \leq N - \bar{M}_{k-1}$ and $\sum_{x,y} u_k(x,y) = M_k$. Taking an action $a_k = (M_k, U_k)$ means recruiting $M_k$ patients in total, of whom $u_k(x,y)$ will be  from subgroup $x$ and assigned to treatment $y$. Fix the decision stage as $k$. When the designer selects one of the actions, we use  Bayes rule to update the distribution of $\theta_{x,y}$ conditioned on ${F}^k$ based on outcome observations of $W_k$, obtaining a posterior distribution conditioned on $\mathcal{F}^{k+1}$. Thus, our posterior distribution for $\theta_{x,y}$ is proportional to $\sqrt{\big| F''({\theta}) \big |} \exp\left(\theta {\cdot} s_{x,y,0}^k - F({\theta}) s_{x,y,1}^k \right)$. The parameters of the posterior distribution can be written as a function of $s^k$ and $W_k$. Define $S^{k+1} = T(s^k, a_k, W_k)$ to be the transition function given observed treatment outcome $W_k$ and  $\mathbb{P}(S^{k+1} | S^k, a_k)$ to be the posterior state transition probabilities conditioned on the information available at step $k$. Having taken an allocation action $a_k = (M_k, U_k)$, the state transition probabilities are: 
$S^{k+1} = s^k + \left[ W^k, U^k \right]$ with posterior predictive probability $\mathbb{P}(W| s^k)$. 

For a state $s = (s_0, s_1)$, the action space is the set of pairs of $(m, u)$ with $m$ less than or equal to the remaining patient budget, and elements in $u$ summing up to $m$. Denote this set as $A(s)$.

Given the state vector $s$, write $P_x(s)$ for the posterior probability that the treatment is effective, conditional on $\theta_{x,0}$ and $\theta_{x,1}$ being drawn according to the posterior distributions. Formally, 
\begin{align}
P_x(s) = \mathbb{P}\left(\frac{ E(\theta_0, \theta_1)}{\mu(\theta_0)} \geq \tau \bigg| \substack{\theta_0 \sim \mathcal{P}_{\theta | s_{x,0,0}, s_{x,0,1}} \\  \theta_1 \sim \mathcal{P}_{\theta | s_{x,1,0}, s_{x,1,1}}} \right). \notag
\end{align}
We can now compute estimated positive and negative sets, $H^{+}_{est}, H^{-}_{est}$. If $x \in H^{-}_{est}$ (i.e. the treatment is estimated to be ineffective for the subgroup $x$) then the probability that $x \in H^+$ (i.e. the treatment is actually effective) is $P_x(s)$. Similarly, if $x\in H^{+}_{est}$ then the probability that $x \in H^{-}$ is $1 - P_x(s)$. Hence, given $H^{+}_{est}, H^{-}_{est}$  the posterior expected total error is:
\begin{align}
\sum_{x \in X} \lambda &P_x(s) 1(x \in H^{-}_{est})& \notag \\ 
&+  (1-\lambda) (1-P_x(s)) 1(x \in H^{+}_{est})& \label{eqn:eqn:opt_identify}
\end{align}

The following proposition identifies the set that minimizes this posterior expected total error.  
\begin{proposition}
Given the terminal state $s$, the set that minimizes this posterior expected total error is  $H^{+}_{est} = \{ x \in X: P_x(s) \geq 1- \lambda \}$.
\end{proposition}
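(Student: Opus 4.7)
The plan is to exploit the fact that the expected total error in \eqref{eqn:eqn:opt_identify} decomposes into a sum of per-subgroup contributions, so the optimization over the partition $(H^{+}_{est}, H^{-}_{est})$ of $X$ reduces to $|X|$ independent binary decisions, one per subgroup $x$.

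First I would note that since $H^{+}_{est}$ and $H^{-}_{est}$ are complementary subsets of $X$, exactly one of $\mathbf{1}(x \in H^{+}_{est})$ and $\mathbf{1}(x \in H^{-}_{est})$ is equal to $1$ for each $x$. Thus the expression
\[
\sum_{x \in X} \Bigl[\lambda P_x(s)\,\mathbf{1}(x \in H^{-}_{est}) + (1-\lambda)(1-P_x(s))\,\mathbf{1}(x \in H^{+}_{est})\Bigr]
\]
equals $\sum_{x \in X} c_x$, where $c_x = (1-\lambda)(1-P_x(s))$ if $x \in H^{+}_{est}$ and $c_x = \lambda P_x(s)$ if $x \in H^{-}_{est}$. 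Minimizing the total error is therefore equivalent to minimizing each $c_x$ separately.

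Next I would compare the two options for each $x$: place $x$ in $H^{+}_{est}$ whenever $(1-\lambda)(1-P_x(s)) \leq \lambda P_x(s)$, and otherwise place $x$ in $H^{-}_{est}$. A quick algebraic simplification of the inequality,
\[
(1-\lambda) - (1-\lambda)P_x(s) \leq \lambda P_x(s) \iff 1-\lambda \leq P_x(s),
\]
shows that the optimal rule is precisely to include $x$ in $H^{+}_{est}$ if and only if $P_x(s) \geq 1-\lambda$. Ties (the boundary case $P_x(s) = 1-\lambda$) can be resolved in favor of $H^{+}_{est}$ without loss. This recovers the set described in the proposition.

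I don't anticipate any real obstacle here. The only subtle point is making explicit that the decisions across subgroups decouple (because the partition constraint on $(H^{+}_{est}, H^{-}_{est})$ is a product constraint over $x \in X$), and that a per-subgroup optimum therefore yields a global optimum; once that observation is in place the proof is a one-line comparison of two affine functions of $P_x(s)$.
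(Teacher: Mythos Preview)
Your proposal is correct and follows essentially the same approach as the paper: decompose the expected total error into independent per-subgroup contributions and compare the two possible costs for each $x$. In fact, your write-up is slightly cleaner than the paper's, which swaps the labels of the two costs (stating that $x\in H^{+}_{est}$ incurs cost $\lambda P_x(s)$ rather than $(1-\lambda)(1-P_x(s))$) before reaching the same conclusion.
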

\begin{proof}
Fix a particular subgroup $x$. If the subgroup $x$ is in the estimated positive set, then total expected error for subgroup $x$ is given by $\lambda P_x(s)$. Similarly, if the subgroup $x$ is in the estimated negative set, then total expected error for subgroup $x$ is given by $(1-\lambda) (1-P_x(s))$. When $P_x(s) \geq 1 - \lambda$, it holds that $\lambda P_x(s) \geq (1-\lambda) (1-P_x(s))$. In that case, $x \in H^{+}_{est}$ minimizes the total expected posterior error.  
\end{proof}
Now define  $g$ by 
\begin{align}
g(x; \lambda) = \lambda (1 - x) 1 (x \geq 1 -\lambda) + (1 - \lambda) x 1 (x < 1-\lambda). \notag
\end{align}
Then, the posterior expected total error can be written as $\mathrm{e}^K = \sum_{x \in X} g\left(P_x(s^K); \lambda\right)$. We'll omit using $\lambda$ in the rest of the paper for notational brevity. We define the reward function $R : S \times A \rightarrow \mathbb{R}$ as the decrease in the  posterior expected total error that results from taking a particular action in state $s$; i.e.
\begin{align}
R(s, a) =\sum_{x \in X} \mathbb{E}\left[ g\left(P_x(s^k)\right) - g\left(P_x(s^{k+1})\right) \big| s^k = s, a^k = a \right] \notag 
\end{align}
where the expectation is taken with respect to the outcome distributions of the treatment and control actions given the state vector $s$. A {\em policy} is a mapping $\pi: S \rightarrow A$ from the state space to the action space, prescribing the number of patients to recruit from each subgroup and how to assign them to treatment groups. The value function of a policy $\pi$ beginning from state $s^0$ is 
\begin{align}
V^{\pi}(s^0) &=  B(s^0) - \sum_{x \in X} \mathbb{E}^{\pi} \left[ g\left(P_x(s^k); \lambda\right) \right]& \notag \\ 
&= \sum_{k=0}^K \mathbb{E}^{\pi} \left[ R(S^k, A_k) \right] &
\end{align}
where $B(s^0) = \sum_{x \in X} g\left(P_x(s^0); \lambda\right) $ and the expectation is taken with respect to the policy $\pi$. Our learning problem is the $K$-stage MDP with tuple: 
$
\{ K , \{ S^k\}, \{ A(s) \}, T(s^k, a_k, W_k), R(s^k, a_k)\}
$
and our goal is to solve the following optimization problem:
\begin{align}
&\text{maximize}_{\pi} \; \sum_{k=1}^K \mathbb{E}^{\pi}\left[ R(S^k, \pi(S^k))  \right]& \notag \\ 
&\text{subject to }  \pi(S_k) \in A(S_k) \ \ \  \text{ for all } \ k  & \notag
\end{align}

In the next subsection, we propose a Dynamic Programming (DP) approach for solving the $K$-stage MDP defined above. 

\subsection{Dynamic Programming (DP) solution} \vspace{-.1in}

In the dynamic programming approach, the value function is defined as the optimal value function given a particular state $S^k$ at a particular stage $k$, and is determined recursively through Bellman's equation. If the value function can be computed efficiently, the optimal policy can also be computed from it. The optimal value function at the terminal stage $K-1$ (the stage in which the last patient is recruited) is given by: 
$$
V^{K-1}(s) = \max_{a \in A(s)} R(s, a)
$$
The dynamic programing principle tells us that the loss function at other indices $0 \leq k < K-1$ is given recursively by 
\begin{eqnarray*}
&Q^k(s, a) =  \mathbb{E}_k \left[ V^k(T(s, a, W))\right], \notag \\ 
&V^k(s) = \max_{a \in A(s)}  Q^k(s, a)
\end{eqnarray*}
where the expectation is taken with respect to the distribution of the cumulative treatment outcomes 
$W$. The dynamic programming principle tells us that any policy that satisfies the following is optimal: $A^{*}_k = \arg\max_{a \in A(S^k)} = Q^k(S^k, a)$. 

However, it is computationally intractable to solve the DP because the state space contains all possible distributions under consideration and so is very large. 
In what follows of the paper, we  propose an (approximate) solution under the restriction that the total number $M$ of  patients to be recruited at each step is fixed and determined by the designer. (As we show in the experiments, the choice of $m$ can have a large impact on the performance.) Our approach is greedy but computationally tractable. The proposed algorithm,  RCT-KG, computes for each action $a \in A$ the one-stage reward that would be obtained by taking $a$, and then selects the action with the maximum one-stage reward

\section{The RCT-KG Algorithm} \vspace{-.1in}

Because solving the MDP is intractable, we offer a tractable approximate solution.  We focus on the setting where the number of patients $M$ in each cohort is fixed: $M = T/K$. In this circumstance, what is to be decided for each cohort (each step) is the number of patients to recruit in each subgroup and the group-specific assignments of these patients (subject to the constraint that the total number of patients in each cohort is $M$).  The action set is therefore
$
A= \bigg\{ u: \sum_{x} \sum_{y} u(x,y) = M \bigg\}
$
and the size of this action set is $|A| =  {M + 2X -1 \choose 2X - 1}$. The Rand-KG algorithm computes an expected improvement in the terminal value function by taking an action $a \in A$. The value function at the terminal stage can be decomposed into improvements in the value function, that is, 
\begin{align*}
V^K(S^K) &= \left[ V^K(S^K)  - V^K(S^{K-1})  \right] \\
&\;\;\;\;\; + \ldots + \left[ V^K(S^{k+1})  - V^K(S^k)  \right] + V^K(S^k)
\end{align*}
The Knowledge Gradient (KG) policy selects the action that  makes the largest improvement in the value function at each instance, that is, 
\begin{align*}
A^{KG}_k(s) &=  \arg\max_{a \in A(s)} \; \mathbb{E}_k\left[ V^K(P(s, a, W)) - V^K(s)\right] &  \\ 
&=  \arg\max_{a \in A(s)}  \; \int_{w} V^K(P(s, a, w)) \mathbb{P}(w | S^k) \,dw &
\end{align*}

\begin{algorithm}
\caption{Optimistic Action Computation}
\label{alg1}
\begin{algorithmic}
\STATE \textbf{Input :} Current state vector: $s$
\STATE Set optimal action $u^{*} = 0$
\FOR{$m = 1, \ldots, M$}
\FOR{$(x,y) \in X \times Y$}
\STATE Set $s_1 = P(s, u^{*}, u^{*} \odot G(z_{\max}))$
\STATE Set $s_2 = P(s, u^{*}, u^{*} \odot G(z_{\min}))$
\STATE Set $\tilde{u} = u^{*} + 1_{(x,y)}$
\STATE Compute $v_1 = V^K(P(s, \tilde{u}, \tilde{u}  \odot G(z_{\max}))) - V^K(s_1)$
\STATE Compute $v_2 = V^K(P(s, \tilde{u}, \tilde{u}  \odot G(z_{\min}))) - V^K(s_2)$
\STATE Compute $q(x,y) = \max(v_1, v_2)$.
\ENDFOR
\STATE Compute $(x^{*}, y^{*}) = \arg\max_{x,y} q(x,y)$.
\STATE Update $u^{*} = u^{*} + 1_{(x^{*}, y^{*})}$.
\ENDFOR
\STATE Return $A^{RCT-KG}(s) = u^{*}$.
\end{algorithmic}
\end{algorithm}

However, computing the KG policy requires computing the posterior predictive distribution and posterior expectation  for each action in $A$. This is a computationally intractable procedure because the size of the action space is on the order of $\mathcal{O}\left( M^{2X -1} \right)$. Hence we propose the RCT-KG algorithm which computes {\em optimistic} improvements in the value functions. Algorithm 1 shows a tractable way of computing these optimistic improvements. At each iteration $m$, the procedure computes the maximum improvement in the value function that can be obtained from an additional sample from the pair $(x,y)$ and increments that index by $1$. The complexity of computing $A^{RCT-KG}(s)$ is only $\mathcal{O}(M X)$.
\begin{algorithm}
\caption{The RCT-KG Algorithm}
\label{Alg:RCT-KG}
\begin{algorithmic}
\STATE \textbf{Input :} $K$, $S^0$
\FOR{$k = 1, \ldots, K$}
\STATE Compute $U_k^{*} = A^{RCT-KG}(S^k)$ using Algorithm 1. 
\STATE Recruit the patients based on $U_k^{*}$, observe cumulative treatment outcome $W_k^{*}$. 
\STATE Update $S^{k+1} = S^k + (W_k^{*}, U_k^{*})$. 
\ENDFOR
\STATE Compute $P_x(S^K)$ for all $x$.
\STATE Compute ${H}^+_{\rm est} = \{ x \in \mathcal{X}: P_x(S^K) \geq 1- \lambda \}$.
\STATE \textbf{Output :} ${H}^+_{\rm est}$. 
\end{algorithmic}
\end{algorithm}

At each decision step $k$, the algorithm computes the best action using the procedure in Algorithm \ref{alg1} and recruits and assigns  patients accordingly. At the end of the current decision step, the state vector is updated based on the observed treatment outcomes.  When the patient budget is exhausted, our algorithm outputs (as the estimated positive set) the  set of subgroups with clinically relevant improvements:
$$
{H}^+_{\rm est} = \{ x \in X: P_x(s^K) \geq 1- \lambda \}
$$
The pseudo-code for RCT-KG is given in Algorithm \ref{Alg:RCT-KG}. 

\section{Experiments} \vspace{-.1in}

In all of our experiments, we use assume that the outcome is either success or failure of the treatment for that patient, and assume the outcomes follow a Bernoulli distribution; we continue to assume that the outcomes for each cohort of patients is observable before the succeeding cohort must be recruited and allocated.  (The clinical study in \cite{hebert1999multicenter} provides a real-life example.)  We assume there are identifiable subgroups (e.g. distinguished as to male/female and/or young/old).  In each experimental setting we generate 1000 independent experiments and report the average over these 1000 experiments.  We compare the results of our algorithm with those of  Uniform Allocation (UA), aka repeated fair coin tossing, that uniformly randomly recruits the patients from subgroups and uniformly assigns the patients to treatment groups, and (where appropriate) with Thompson Sampling (TS)~\cite{agrawal2012analysis} that draws the parameters of the outcomes for each action and then selects the action with the best sample outcomes and an adaptive randomization, and a variant of DexFEM~\cite{warner2015low}, that shifts the treatment allocation ratio towards treatments with higher posterior variance. In the comparisons with TS and DexFEM, we assume that the recruitment of patients from subgroups is uniform and that it is only the allocation of patients to treatment/control that depends on previously observed outcomes.

For the first experiment we consider a setting with two subgroups; for the remaining experiments, we consider a setting with four subgroups.
\vspace{-.1in}
\subsection{Error Rates with Two Subgroups} \vspace{-.1in}

We begin with a setting with 2 subgroups 0, 1.  We assume the true parameters for the subgroups are $\theta_{x,0} = 0.5$ for  $x \in \{0,1\}$,  $\theta_{1,1} = 0.7$ and we vary $\theta_{0,1}$ from $0.51$ to $0.70$. (Note that identifying the best treatment is more challenging for subgroup 0 than for subgroup 1.) We recruit 100 patients in each of 10 cohorts -- 1000 patients in total.  Figure \ref{fig:effic_gap} compares the performance of our RCT-KG with UA in terms of error rates; as can be seen, our algorithm outperforms UA throughout the range of the parameter $\theta_{0,1}$, and the improvement in performance is greatest in the middle range of this parameter, when the difference  between treatment and control among the subgroups is greatest. This improvement is achieved because our algorithm recruits and samples more frequently from subgroup 0, which represents the more challenging learning problem

\subsection{Error Rates and Confidence Levels with Four Subgroups} \vspace{-.1in}

We now turn to a setting in which there are 4 subgroups 0,1,2,3.  We take the parameters to be $\theta_{x,0} = 0.5$ for  all $x$ and $\theta_{x,1}  = 0.3, 0.45, 0.55, 0.7$ for $x = 0,1,2,3$.  Note that there are two subgroups in which the treatment action is ineffective and two subgroups in which it is effective, and that identification is easier in  subgroups $0,3$ than in subgroups $1,2$.   We examine a number of aspects.

\subsubsection{Confidence Levels Across Cohorts} \vspace{-.1in}

In the first experiment in this setting, we recruit 100 patients in each of 10 cohorts, and compare the confidence levels achieved for each cohort and for each subgroup; the results for 4 horizons are shown in Table \ref{table:comp_conf}.  As can be seen, the RCT-KG, UA and DexFEM algorithms achieve very similar confidence levels (probability of correctly identifying the actual label) for subgroups 0,3 at each of these time horizons, but RCT-KG algorithm achieves significantly better confidence levels for subgroups 1,2 -- the subgroups for which identification is more difficult.  RCT-KG achieves superior confidence levels because it recruits more patients to subgroups 1,2 and allocates patients in each subgroup more informatively.  To illustrate, we refer to Table \ref{table:num_patient_subgroup}, which shows the total number of patients recruited to each of the subgroups and the allocation of patients to control and treatment within the subgroups. (Remember that we are reporting averages over 1000 experiments.)  

\begin{figure*}
    \centering
    \subfigure[Type-I error]{\includegraphics[width=0.32\textwidth]{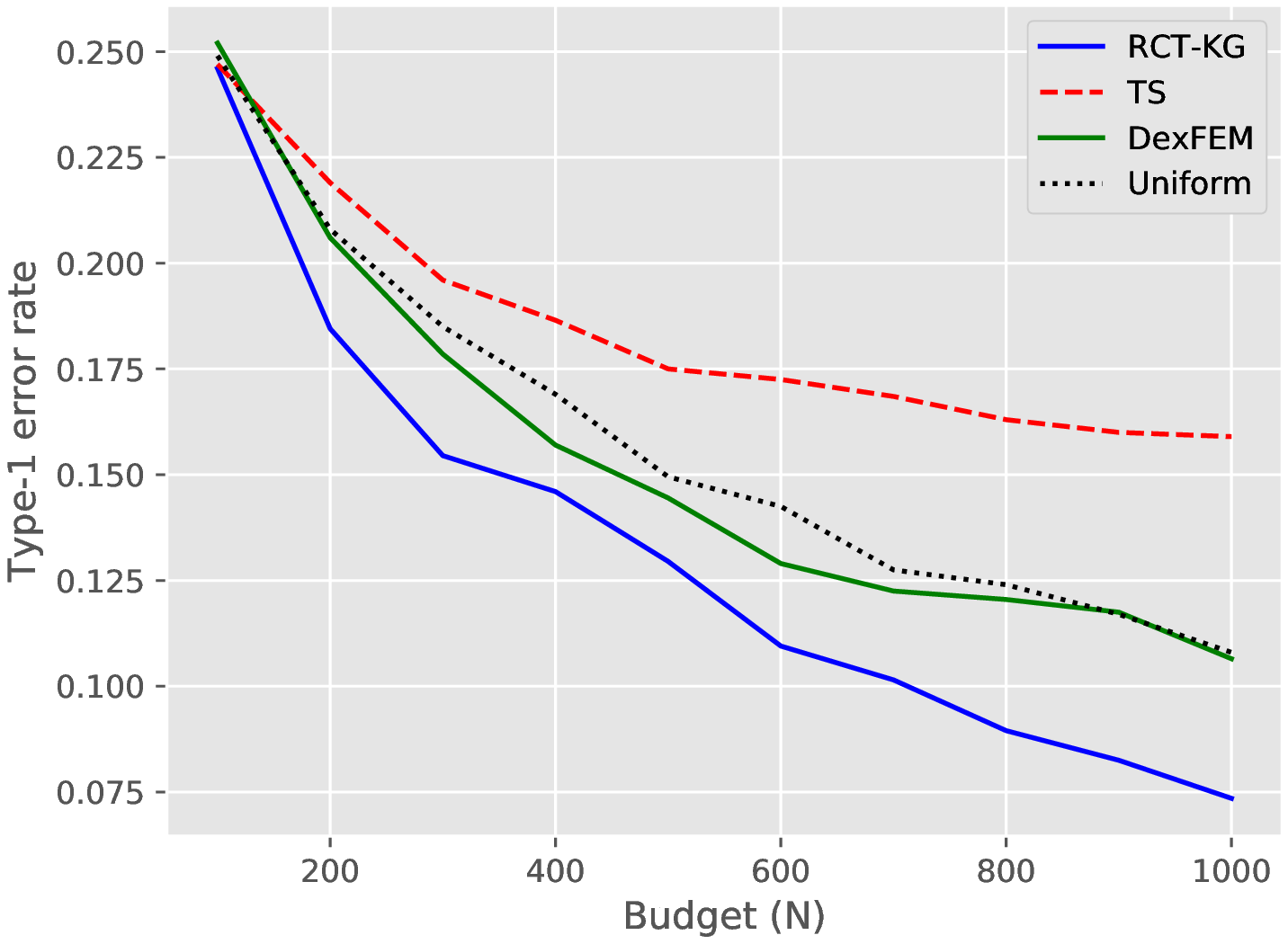}}
    \subfigure[Type-II error]{\includegraphics[width=0.32\textwidth]{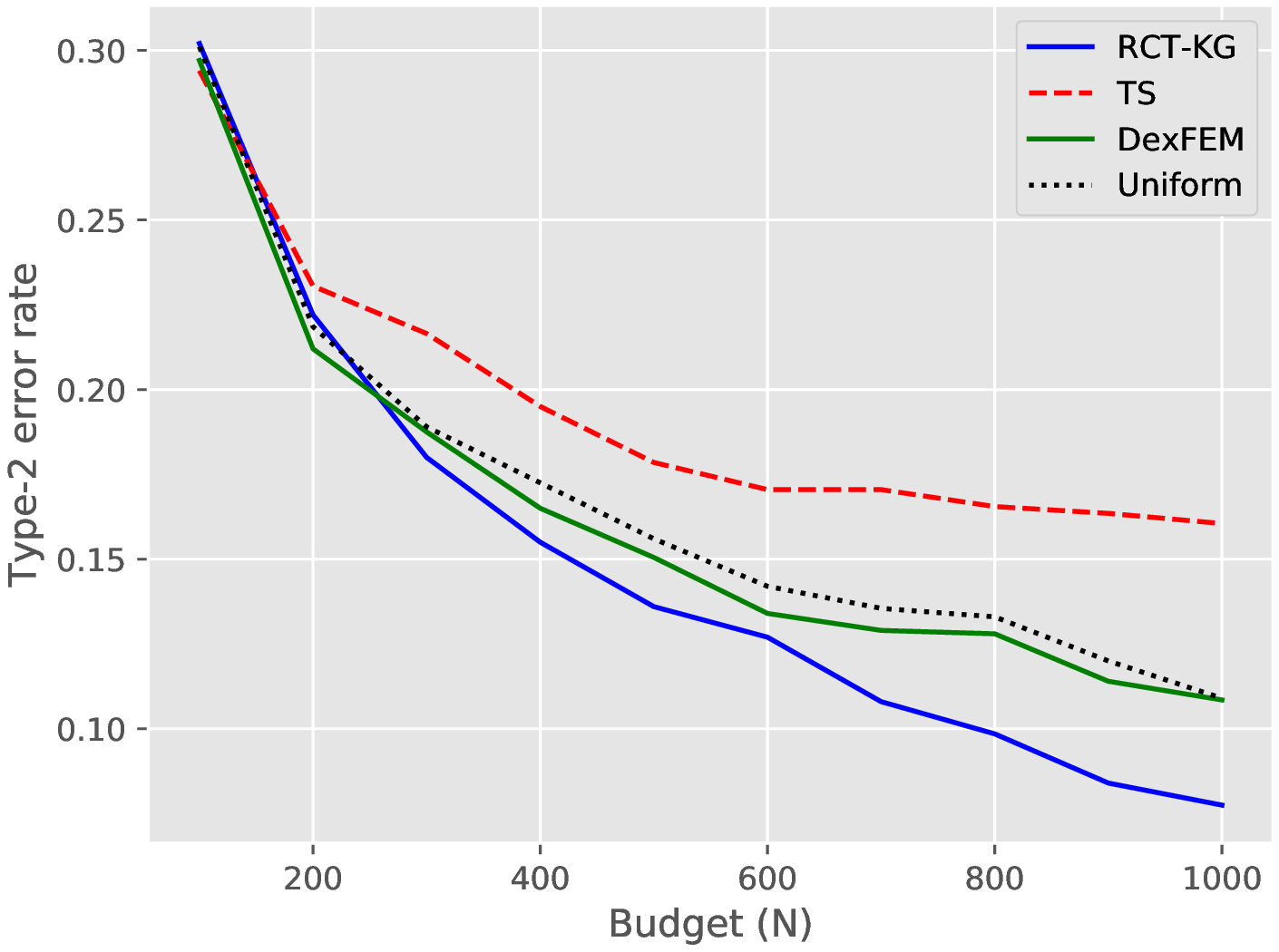}}   
    \subfigure[Total error]{\includegraphics[width=0.32\textwidth]{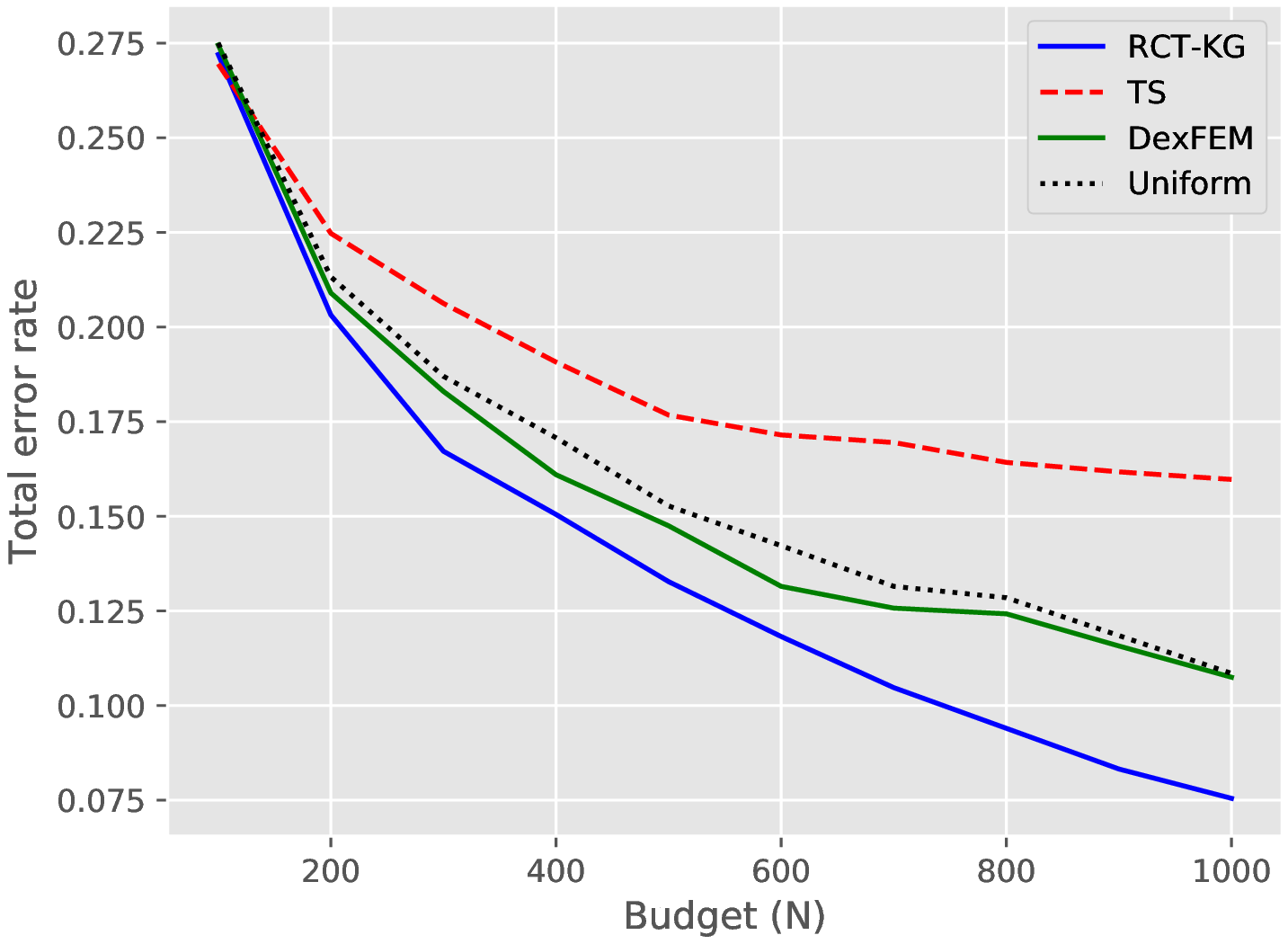}}        
    \caption{Error Comparisons with Benchmarks}
            \label{fig:effic_budget}       

\end{figure*}

\begin{figure}[h!]
\centering   
    \includegraphics[width=0.99\linewidth]{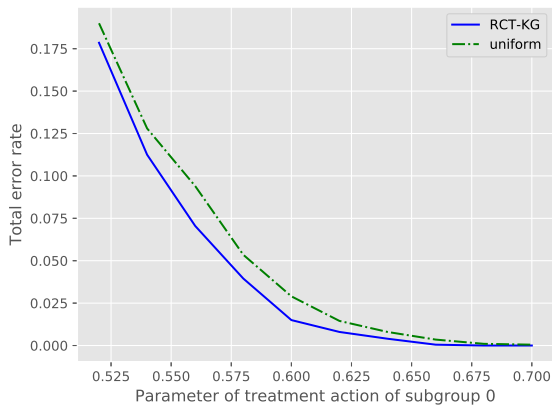}     
    \caption{Total error rates for different parameter}
        \label{fig:effic_gap}
\end{figure}

\begin{table}[h!]
\centering
\begin{tabular}{ | c | c | c | c | c |   }
    \hline
    Algorithm/ SG & $0$ & $1$ & $2$ & $3$  \\ \hline
    RCT-KG & $98.79$& $82.92$ & $83.56$ & $98.78$ \\ \hline
    DexFEM &$98.94$& $79.28$ & $79.25$& $99.00$ \\ \hline
    UA & $98.92$& $78.93$ & $78.96$& $98.94$ \\ \hline
\end{tabular} 
\caption{Comparison of confidence levels on subgroups}
\label{table:comp_conf}
\end{table}

\begin{table}[h!]
\centering
\begin{tabular}{ | c | c | c | c | c |}
    \hline
    SG & $0$ & $1$ & $2$ & $3$  \\\hline
    RCT-KG & $81, 60$ & $190, 181$ & $186, 177$ & $81, 54$ \\ \hline
    DexFEM & $137, 118$ & $128, 127$ & $128, 127$ & $137, 118$ \\ \hline
    UA  & $128, 127$ & $127, 128$ & $128, 127$ & $127, 128$ \\ \hline
    \end{tabular}
    \vspace{0.2in}
\caption{Recruitment and allocation in each subgroup \\ $\#$ allocated to control, 
$\#$ allocated to treatment}
\label{table:num_patient_subgroup}
\end{table}

\subsubsection{Achieving a Prescribed Confidence Level} \vspace{-.1in}

In this  experiment, we recruited 100 patients in each cohort and continued recruiting patients until a prescribed (average) confidence level $\beta = 0.90, 0.95$ was achieved (i.e., until $\frac{1}{X} \sum_{x \in \mathcal{X}} g(P_x(S^k)) < 1- \beta$.)  Table \ref{table:result_trial_length} shows the number of cohorts necessary for each algorithm to achieve the prescribed confidence level; as can be seen, RCT-KG achieves the same confidence level as UA and DexFEM using fewer cohorts of patients, which means that a RCT could be carried out with fewer patients and completed in less time.

\begin{table}[h!]
\centering
\begin{tabular}{ | c | c | c |}
    \hline
    Algorithm & $\beta = 0.95$ & $\beta = 0.90$ \\ \hline
    RCT-KG & $12.6$ & $7.2$ \\ \hline
    DexFEM & $22.5$ & $10.2$ \\ \hline
    UA & $22.9$ & $10.7$ \\ \hline
    \end{tabular}
\caption{Comparison of length for a confidence level}
\label{table:result_trial_length}
\end{table}

\subsubsection{Error Rates and Patient Budgets} \vspace{-.1in}

In this experiment we recruited 100 patients in each cohort and computed the type-I, type-II and overall error rates for various total patient budgets.  As seen from Figure \ref{fig:effic_budget}, RCT-KG significantly outperformed the UA, DexFEM and TS algorithms for all budgets.  (TS did especially poorly when the patient budget is large because TS aims to maximize the patient benefit, not the learning performance, and so allocated more patients to the treatment that has been found to be better at each stage, which slows learning.)

\subsubsection{Cohort Size} \vspace{-.1in}

In this experiment, we compared the performance in terms of total error of RCT-KG and UA when  $m$ patients were recruited in each cohort with a total  budget of $500$ patients. As seen in Table \ref{table:metrics_vs_m}, the error rate of UA is independent of $m$ because all patients are recruited and allocated in the same way in every cohort, but the error rate of RCT-KG is significantly smaller when $m$ is smaller because smaller cohorts provide more opportunities to learn.

\begin{table}[h!]
\centering
\begin{tabular}{ | c | c | c | c | c|  }
    \hline
    $m$ & $25$ & $50$ & $100$ & $250$  \\\hline
    RCT-KG 	 & $0.1245$ & $0.1281$ & $0.1292$ & $0.1411$ \\ \hline
    UA &  $0.1484$ & $0.1484$ & $0.1484$ & $0.1484$  \\ \hline
    \end{tabular} 
\caption{Total Errors for Different Cohort Sizes}
\label{table:metrics_vs_m}
\end{table}

\subsection{Type 1 and Type 2 errors}
Figure \ref{fig:tradeoff}  illustrates the trade-off between  type-I and type-II errors for the RCT-KG algorithm (100 patients in each of 10 cohorts.) Of course, as $\lambda$ increases, the type-I error decreases and the type-II error increases. 

\begin{figure}[h!]
\centering   
    \includegraphics[width=0.49\textwidth]{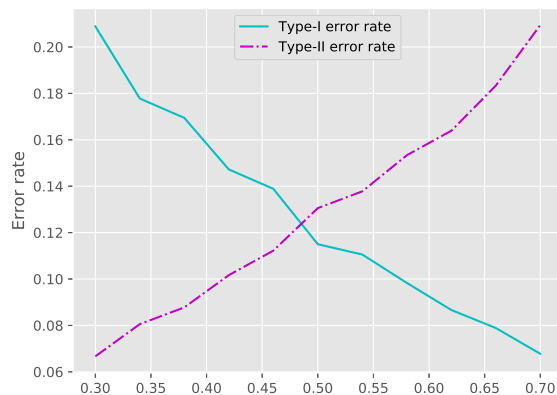} \label{fig:tradeoff}
    \caption{Tradeoffs between Type-I and Type-II errors}
\end{figure}

\subsection{Informative Prior}
In many circumstances, the clinicians controlling the trial may have  priors about the parameters of the outcome distribution; of course these priors may be informative but not entirely correct.  To illustrate the effect of having such informative priors, we conducted an experiment in which informative priors were generated by sampling $50$ patients from each subgroup and treatment group, but noting count these patients as part of the total patient budget, which was either 500 or 1000.  As seen from  Table \ref{table:informative}, having an informative prior increases the amount by which RCT-KG improves over UA.  Having an informative prior is useful because it allows RCT-KG to make more informed decisions in the earlier stages and in particular to focus more on subgroups for which the difference between the true effectiveness of the treatment and control is smaller.

\begin{table}[h!]
\centering
\begin{tabular}{ | c | c | c |  }
    \hline
    Budget & $500$ & $1000$  \\\hline
    Non-informative 	 & $0.1151$ & $0.2813$  \\ \hline
    Informative for sg 0,3  & $0.2000$ & $0.3703$    \\ \hline
    Informative for sg 1,2  & $0.1323$ & $0.2878$    \\ \hline
    \end{tabular} 
\caption{Improvement score for different budgets}
\label{table:informative}
\end{table}

\section{Conclusion and Future Work}\vspace{-.1in}
 This paper makes three main contributions.  (1) We formalize the problem of recruiting and allocating patients in a RCT as a finite stage MDP. (2) We provide a greedy computationally tractable algorithm RCT-KG that provides an approximately optimal solution to this problem. (3) We illustrate the effectiveness of our algorithm in a collection of experiments using synthetic datasets for which outcomes are drawn from a Bernoulli distribution with unknown probabilities.

 The most important assumptions  of this paper are that patients can be recruited from subgroups that are identified in advance and that the final outcomes for patients in each cohort can be observed before patients in the next cohort are recruited from the various subgroups and allocated to treatment/control.  In future work we will address both of these assumptions.  In particular, we will address the settings in which subgroups are not identified in advance but must be learned during the course of the trial, and the setting in which only partial information about the outcomes of earlier cohorts is known before patients in the next cohort are recruited and allocated.
 
 We have noted that this paper makes specific assumptions in order that our approach can be followed.  We should also add that actually constructing trials according to the method suggested here will require convincing both those who conduct the trials (e.g. pharmaceutical companies) and those who assess the results of the trials (e.g. the regulatory agencies) that the substantial improvements that are possible using our method justify the changes to the way trials are presently conducted.

\bibliographystyle{abbrvnat}

\end{document}